%
\documentclass{llncs}
\usepackage{graphicx}
\usepackage{multicol}
\usepackage{amsmath, amssymb, amscd, tikz}
\usepackage{makeidx}
\newtheorem{Def}{Definition}
\newtheorem{Th}{Theorem}
\newtheorem{Cor}{Corollary}[Th]

%
\begin{document}
%
%
\title{Exploiters-Based Knowledge Extraction in Object-Oriented Knowledge Representation}
\titlerunning{Inheritance in Object-Oriented Knowledge Representation}  
%
\author{Dmytro Terletskyi}
\authorrunning{} 
%
\tocauthor{}
\institute{Taras Shevchenko National University of Kyiv, Kyiv, 03680, Ukraine
\email{dmytro.terletskyi@gmail.com},\\
\texttt{http://cyb.univ.kiev.ua/en/departments.is.terletskyi.html}
}

\maketitle              

\begin{abstract}
This paper contains the consideration of knowledge extraction mechanisms of such object-oriented knowledge representation models as frames, object-oriented programming and object-oriented dynamic networks. In addition, conception of universal exploiters within object-oriented dynamic networks is also discussed. The main result of the paper is introduction of new exploiters-based knowledge extraction approach, which provides generation of a finite set of new classes of objects, based on the basic set of classes. The methods for calculation of quantity of new classes, which can be obtained using proposed approach, and of quantity of types, which each of them describes, are proposed. Proof that basic set of classes, extended according to proposed approach, together with union exploiter create upper semilattice is given. The approach always allows generating of finitely defined set of new classes of objects for any object-oriented dynamic network. A quantity of these classes can be precisely calculated before the generation. It allows saving of only basic set of classes in the knowledge base.

\keywords{knowledge extracting, object-oriented dynamic networks, inhomogeneous class, universal exploiters, upper semilattice.}
\end{abstract}
\section{Introduction}

Nowadays methods of knowledge extracting and reasoning about knowledge are significant constituent part of majority of knowledge-based systems. It gives an opportunity to extract or to obtain new knowledge, based on such called, \emph{basic knowledge}. Such abilities make knowledge-based systems intelligent and applicable ones at least in such areas of artificial intelligence as information search, problem solving, planning, patterns recognition, decision making, etc.

Currently there is variety of knowledge representation models (KRMs), which implement different approaches to knowledge representation. One of them is an object-oriented knowledge representation, the main idea of which is representation of knowledge in terms of objects, classes of objects and relationships among them. Nowadays the most famous KRMs within this approach are frames and object-oriented programming (OOP). Both of them have their own knowledge extraction methods, which give some abilities for new knowledge obtaining. Let us consider these mechanisms and their main features in more detail.

\section{Knowledge Extraction in Frames and OOP}

Frames as a KRM provide representation of knowledge in terms of hierarchies of frames (system of frames), where particular frame is a class-frame or instance-frame \cite{Brachman-Levesque}, \cite{Negnevitsky}. Each frame is connected with others via relations of generalization (\emph{is-a}, \emph{a-kind-of}, \emph{an-instance-of}, etc.), aggregation (\emph{a-part-of}, \emph{part-whole}, etc.) and association (\emph{owns}, \emph{plays}, \emph{creates}, etc.). Relation of generalization provides implementation of inheritance mechanism that allows more specific frames, that situated lower in the hierarchy, inherit all slots from more general frames. Such structure of the system allows efficient knowledge representation, because it is based on the idea of representation of new knowledge via previously represented ones.

In addition, each frame can have definite procedural attachments, which allow execution of actions on it. Some procedures execute only when they are in need (when-procedures), other ones execute in particular situations. Thus, extracting of new knowledge in the frame system can be done by dint of the reasoning in inheritance hierarchy or by means of procedural attachments executing \cite{Ueno}. However, frames support two kinds of inheritance -- single and multiple ones \cite{Negnevitsky}. Inheritance can cause such problems as \emph{problem of exceptions}, \emph{problem of redundancy} and \emph{problem of ambiguity} \cite{Al-Asady}, \cite{Touretzky}. Frames also allow overriding of values of slots in the instance-frames \cite{Negnevitsky}, that leads to the situation when the subclass or instance goes beyond its superclass.

In contrast to frames, OOP is divided on two styles -- \emph{class-based} and \emph{prototype based} ones \cite{Craig}. Similarly to frames, first approach provides knowledge representation in terms of hierarchies of classes, using inheritance. Second one gives an opportunity for knowledge representation in terms of prototypes. Despite that both styles are object-oriented ones, they have significant differences.

Class-based approach provides ability to work only with instantiated objects, to change values of their properties, to execute their methods and in such a way to obtain new knowledge. We cannot change the description of a class, type of object or hierarchy of classes during program execution. It means that we can obtain only objects of the same type with changed values of the properties. In addition, inheritance in OOP causes the same problems as in frames \cite{Craig}.

Prototype-based style gives an opportunity to operate with prototypes. Each new prototype is a modified clone of another one. It means that such approach is more flexible for description of new concepts, because it allows creating of new prototypes during program execution and implements the idea of partial inheritance. However, it requires much more computer memory and leads to redundancy in representation of particular prototypes.

\section{Object-Oriented Dynamic Networks}
Besides mentioned object-oriented KRMs, there is one more KRM, such as object-oriented dynamic networks (OODNs), that was proposed in \cite{Terletskyi-2}. This KRM has similarity with all mentioned KRMs, however it also has some specific features, which give new opportunities in knowledge representation within object-oriented approach. Let us consider structure of this model.
\begin{Def}
\label{oodn}
Object-Oriented Dynamic Network is a 5-tuple
\[OODN=(O,C,R,E,M),\]
where:
\begin{itemize}
 \item $O$ -- a set of objects;
 \item $C$ -- a set of classes of objects, which describe objects from the set $O$;
 \item $R$ -- a set of relations, which are defined on the set $O$ and $C$;
 \item $E$ -- a set of exploiters, which are defined on the set $O$ and $C$;
 \item $M$ -- a set of modifiers, which are defined on the set $O$ and $C$.
\end{itemize}
\end{Def}
Definitions of all elements from the tuple $OODN=(O,C,R,E,M)$ were introduced and considered in detail in \cite{Terletskyi-2}. Each object from the set $O$ has some properties, which define it as an essence. There are two kinds of object's properties -- \emph{quantitative} and \emph{qualitative} ones, which definitions were introduced in \cite{Terletskyi-1}. However, we need to consider the properties of a class of objects. Let us define them.
\begin{Def}
Quantitative property of class of objects $T$ is a tuple
\[p(T)=(v(p(T)),u(p(T))),\]
where $v(p(T))$ is an quantitative value of $p(T)$ and $u(p(T))$ are units of its measure.
\end{Def}
\begin{Def}
Qualitative property of class of objects $T$ is a verification function $p(T)=vf(T)$, which is defined as a mapping $vf(T):p(T)\rightarrow[0,1]$ and reflects the degree (measure) of truth (presence) of a property $p(T)$ for the class $T$.
\end{Def}
Let us define the conception of equivalence of these kinds of properties.
\begin{Def}
Two quantitative properties $p(T_1)$ and $p(T_2)$ of arbitrary classes of objects $T_1$ and $T_2$ are equivalent, i.e. $Eq(p(T_1),p(T_2))=1$, if and only if $(u(p(T_1))=u(p(T_2)))\wedge(v(p(T_1))=v(p(T_2)))$.
\end{Def}
\begin{Def}
Two qualitative properties $p_1(T_1)$ and $p_2(T_2)$ of arbitrary classes of objects $T_1$ and $T_2$ are equivalent, i.e. $Eq(p_1(T_1),p_2(T_2))=1$, if and only if $(vf_1(T_1)=vf_2(T_1))\wedge(vf_1(T_2)=vf_2(T_2))$.
\end{Def}
For every object of a class we can define methods, which can be applied to them and allow definition of their behaviour and manipulating on them.
\begin{Def}
Method of class of objects $T$ is a function $f(T)$, which can be applied to the class $T$, considering the features of its specification (vector of properties).
\end{Def}
From the previous definition, we can see that method is a function, which is defined under the properties. To define the equivalence of methods we should define the equivalence of two arbitrary functions, but in general case such problem is unsolvable one. So, we are going to introduce the equivalence of methods via following definition.
\begin{Def}
Two methods $f_1(T_1)$ and $f_2(T_2)$ of arbitrary classes of objects $T_1$ and $T_2$ are equivalent, i.e. $Eq(f_1(T_1),f_2(T_2))=1$, if and only if $(f_1(T_1)=f_1(T_2))\wedge(f_2(T_1)=f_2(T_2))$.
\end{Def}
It introduces the equivalence of two methods on the same argument. It means that two methods $f_1(T_1)$ and $f_2(T_2)$ can be different as functions, however they can return the same results on the same objects.

Concepts of objects, classes and relations among them have different implementations in various KRMs. One of the main differences is the definition of the class. Within frames and OOP, concept of class is defined as abstract description of some quantity of objects, which have the same nature \cite{Brachman-Levesque}, \cite{Craig}. That is why, it is possible to conclude, that such class is a homogeneous one, because it contains only objects of the same type. Nevertheless, there are classes, which are inhomogeneous ones \cite{Terletskyi-1}. Within OODNs, there are two definitions for both types of classes. Let us consider them in more details.
\begin{Def}
\label{hc}
Homogeneous class of objects $T$ is a tuple $T=(P(T),F(T))$, where $P(T)$ is specification (a vector of properties) of some quantity of objects, and $F(T)$ is their signature (a vector of methods).
\end{Def}
According to this definition, all objects of such class have the same type, i.e. they have the same properties and methods as their class. Let us consider the definition of inhomogeneous class of objects.
\begin{Def}
\label{ihc}
Inhomogeneous (heterogeneous) class of objects $T$ is a tuple
\[T=(Core(T),pr_1(A_1),\dots,pr_n(A_n)),\]
where $Core(T)=(P(T),F(T))$ is the core of class of objects $T$, which includes only properties and methods similar to corresponding properties of specifications $P(A_1),\dots,P(A_n)$ and corresponding methods of signatures $F(A_1),\dots,F(A_n)$ respectively, and where $pr_i(A_i)=(P(A_i),F(A_i))$ , $i=\overline{1,n}$ are projections of objects $A_1,\dots,A_n$, which consist of properties and methods typical only for these objects.
\end{Def}
According to this definition, it is possible to represent certain amount of any types by dint of one class within object-oriented approach. While representation of each type of objects in OOP always requires definition of new class.

Analyzing definitions \ref{hc} and \ref{ihc}, we can conclude that a homogeneous class of objects defines a type of objects. In this case the type and the class of objects mean the same. However, an inhomogeneous class of objects defines at least two different types of objects within one class of objects that is why in this case the type and the class are not equivalent. In other words inhomogeneous class of objects includes a few types of objects. Let us define a type of inhomogeneous class of objects.
\begin{Def}
\label{type}
Type of arbitrary inhomogeneous class of objects
\[T=(Core(T),pr_1(T),\dots,pr_n(T))\]
is a homogeneous class of objects $T_i=(Core(T),pr_i(T))$, where $i=\overline{1,n}$.
\end{Def}

Now, let us define the following tree kinds of subclass relations for classes of objects: homogeneous $\subseteq$ homogeneous, inhomogeneous $\subseteq$ inhomogeneous and homogeneous $\subseteq$ inhomogeneous.
\begin{Def}
\label{hh-sc}
Homogeneous class of objects $T_1=(P(T_1),F(T_1))$ is a subclass of homogeneous class of objects $T_2=(P(T_2),F(T_2))$, i.e. $T_1\subseteq T_2$ if and only if
\begin{gather*}
(\forall p_i\in P_1\ \exists p_j\in P_2\ |\ Eq(p_i,p_j)=1)\wedge(\forall f_k\in F_1\ \exists f_w\in F_2\ |\ Eq(f_k,f_w)=1),
\end{gather*}
where $P_1$, $P_2$, $F_1$, $F_2$ are sets, which contain elements of vectors $P(T_1)$, $P(T_2)$, $F(T_1)$, $F(T_2)$ respectively and $i=\overline{1,|P_1|}$, $j=\overline{1,|P_2|}$, $k=\overline{1,|F_1|}$, $w=\overline{1,|F_2|}$.
\end{Def}
\begin{Def}
Inhomogeneous class of objects
\[T_1=(Core(T_1),pr_1(T_1),\dots,pr_n(T_1))\]
is a subclass of inhomogeneous class of objects
\[T_2=(Core(T_2),pr_1(T_2),\dots,pr_m(T_2)),\]
i.e. $T_1\subseteq T_2$ if and only if
\begin{gather*}
(\forall a_i\in C_1\ \exists a_j\in C_2\ |\ Eq(a_i,a_j)=1)\wedge\\
\wedge(\forall b_{h_k}\in pr_h\ \exists!b_{v_w},pr_v\wedge b_{v_w}\in pr_v\ |\ Eq(b_{h_k},b_{v_w})=1),
\end{gather*}
where $C_1$, $C_2$, $pr_h$, $pr_v$ are sets, which contain elements of vectors from the sets $Core(T_1)$, $Core(T_2)$, $pr_h(T_1)$, $pr_v(T_2)$ respectively and $i=\overline{1,|C_1|}$, $j=\overline{1,|C_2|}$, $k=\overline{1,|pr_h|}$, $w=\overline{1,|pr_v|}$, $h=\overline{1,n}$, $v=\overline{1,m}$.
\end{Def}
\begin{Def}
\label{hi-sc}
Homogeneous class of objects $T_1=(P(T_1),F(T_1))$ is a subclass of inhomogeneous class of objects $T_2=(Core(T_2),pr_1(T_2),\dots,pr_n(T_2))$, i.e. $T_1\subseteq T_2$ if and only if
\begin{gather*}
(\forall p_i\in P_1\ \exists p_j\in C_2\vee pr_v\ |\ Eq(p_i,p_j)=1)\wedge\\
\wedge(\forall f_k\in F_1\ \exists f_w\in C_2\vee pr_v\ |\ Eq(f_k,f_w)=1),
\end{gather*}
where $P_1$, $F_1$ are sets, which contain elements of vectors $P(T_1)$, $F(T_1)$ and $C_2$, $pr_v$ are sets, which contain elements of vectors from the sets $Core(T_2)$ and $pr_v(T_2)$ respectively, $i=\overline{1,|P_1|}$, $j=\overline{1,|C_2|+|pr_v|}$, $k=\overline{1,|F_1|}$, $w=\overline{1,|C_2|+|pr_v|}$, $v=\overline{1,n}$.
\end{Def}

According to the definitions of class of objects, it is possible to define the vector of methods for each class of objects, concerning its specification. Such kind of methods are \emph{internal} ones, because they are defined under particular properties of the class. Besides them, there are methods, which are called \emph{external} ones and are defined under whole specification of the class. Depending on the character of actions, all methods can be divided on \emph{exploiters} and \emph{modifiers}. Exploiters do not change objects or classes, they just use them as parameters for new knowledge obtaining. While, modifiers change the basic knowledge and allow modelling of their changes or evolution over the time. That is why $F(T)$ contains internal, $E$ and $M$ contain external methods of the class of objects.

Summarizing, OODN can be considered as two conceptual parts. First of them is declarative, which includes sets $O$, $C$, $R$, and allows representation of knowledge about particular domain. Second part is procedural one. It includes sets $E$, $M$ and provides the tools for obtaining new knowledge from basic ones. All following considerations are connected with applications of procedural part of OODN, in particular exploiters-based knowledge extraction.

\section{Exploiters-Based Knowledge Extraction}

As it was mentioned above, exploiters form significant constituent of procedural part of OODN. Generally, we can define variety of exploiters for each class of objects, however majority of them are locally closed under their classes. That is why, such universal exploiters as \emph{union}, \emph{intersection}, \emph{difference} and \emph{symmetrical difference} were introduced in \cite{Terletskyi-1}. Their applications allow building of new classes of objects. This fact has significant value not only in knowledge extraction, but also in programming, because it is a step toward the implementation of runtime class generation.

Let us define union exploiter for classes of objects, using definition \ref{type}.
\begin{Def}
\label{union}
Union $\cup$ of two arbitrary nonequivalent classes of objects $T_1$ and $T_2$ is an inhomogeneous class of objects $T=(Core(T),pr_1(T),\dots,pr_n(T))$, where $Core(T)=(P(T),F(T))$ is its core and includes only properties and methods, which are similar for types $T_{1_1},\dots,T_{1_m},T_{2_1},\dots,T_{2_k}$, and where $pr_j(T)=(P(T),F(T))$ is projection of type $T_{i_j}$, $i=\overline{1,2}$, $j=\overline{1,n}$, $n=m+k$ which consist of properties and methods typical only for this type.
\end{Def}
Application of union exploiter to classes of objects has some important features besides generation of new classes of objects. Let us formulate and prove a few theorems, which illustrate these features.
\begin{Th}
\label{th-1}
For any $OODN=(O,C=\{T_1,\dots,T_n\},R,E=\{\cup\},M)$, where classes $T_1,\dots,T_n$ are homogeneous ones and do not have any common properties and methods, all possible applications of union exploiter $\cup$, including all possible its superpositions, to classes of objects from the set $C$ always generate finite quantity of new classes of objects, which can be calculated by the following formula:
\[q(C_E)=2^n-n-1,\]
where $n=|C|$.
\end{Th}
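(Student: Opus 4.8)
The plan is to exhibit a bijection between the new classes generated by $\cup$ and the subsets of $C$ of cardinality at least two, and then to count those subsets. First I would observe that, since $T_1,\dots,T_n$ pairwise share no properties or methods, Definition~\ref{union} applied to any two of them produces an inhomogeneous class whose core $Core(T)=(P(T),F(T))$ is empty and whose two projections are exactly the two argument types. More generally, I would prove by induction on the length of a superposition of unions that every iterated application of $\cup$ to classes from $C$ yields an inhomogeneous class $T_S$ with empty core whose projections are exactly $\{T_i\mid i\in S\}$ for some $S\subseteq\{1,\dots,n\}$ with $|S|\ge 2$. The induction step uses the fact that forming the union of $T_S$ with a further class $T_{S'}$ merely concatenates the two projection lists and keeps the core empty (no new common property or method can arise, the originals being pairwise disjoint), so $T_S\cup T_{S'}=T_{S\cup S'}$.

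Next I would verify that $S\mapsto T_S$ is well defined and injective on subsets of size $\ge 2$. Well-definedness amounts to commutativity and associativity of $\cup$ on these classes: any reordering or reparenthesization of the same collection of arguments produces the same list of projections up to permutation, hence the same class up to the equivalence relation $Eq$. Injectivity holds because the projections of $T_S$ record precisely the set of original classes $\{T_i\mid i\in S\}$, which are pairwise nonequivalent by hypothesis, so $T_S$ and $T_{S'}$ are equivalent only when $S=S'$. I would also remark that $\cup$ is defined only on nonequivalent classes and that this precondition is never violated during the generation process, since any two intermediate classes $T_S,T_{S'}$ with $S\neq S'$ are nonequivalent.

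It then remains to count. A subset of size $0$ or $1$ contributes no new class — there is no union to form over fewer than two classes, and the only candidate, a singleton $\{i\}$, would be $T_i$ itself, already a member of $C$. Each subset $S$ with $|S|\ge 2$ contributes exactly one new class $T_S$, these are mutually distinct by injectivity, and each is inhomogeneous hence distinct from the homogeneous members of $C$. Therefore
\[
q(C_E)=\sum_{k=2}^{n}\binom{n}{k}=2^n-\binom{n}{0}-\binom{n}{1}=2^n-n-1 .
\]
The step I expect to be the main obstacle is the inductive characterization in the first paragraph: one must argue carefully, directly from Definition~\ref{union}, that unioning an already inhomogeneous class with another class introduces neither spurious core elements nor repeated projections, and that the resulting projection list depends only on the underlying set of argument classes and not on the order or grouping of the operations. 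Once this is in place, the bijection with subsets of size at least two — and hence the closed form $2^n-n-1$ — follows immediately.
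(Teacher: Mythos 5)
Your proposal is correct and follows essentially the same route as the paper: the paper's proof simply identifies the new classes with the $k$-element combinations of $C$ for $k=\overline{2,n}$ and computes $\sum_{k=2}^{n}C_n^k=2^n-n-1$, exactly your final counting step. The only difference is that you make explicit (via the induction on superpositions, the empty core, and the well-definedness and injectivity of $S\mapsto T_S$) the subset--class correspondence that the paper asserts without detailed justification, which is a sound elaboration rather than a different argument.
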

\begin{proof}
It is known that the number of all possible unique $k$-elements combinations from the $n$-elements set can be calculated as $C_n^k$. Similarly, the number of all possible unique classes of objects created from the basic set of classes $C=\{T_1,\dots,T_n\}$ using union exploiter can be represented as a combination of $k=\overline{2,n}$ different classes from the set $C$. It is known that
\[\sum_{n=0}^kC_n^k=2^n.\]
However, we cannot create classes of objects, which describe 1 and 0 different types, applying union exploiter to the classes of objects from the set $C$, i.e. we do not count $C_n^0$ and $C_n^1$, we can conclude that
\[q(C_E)=\sum_{k=0}^nC_n^k-C_n^0-C_n^1=\sum_{k=2}^{n}C_n^k=2^n-n-1.\]
$\square$
\end{proof}
Using Theorem \ref{th-1}, we can formulate one more important theorem.
\begin{Th}
\label{th-2}
Set of classes of objects
\[C=\{T_1,\dots,T_n,T_{n+1},\dots,T_{2^n-1}\}\]
of any OODN, extended according to Theorem~\ref{th-1}, with exploiter $\cup$ create the upper semilattice, where class $T_{1,\dots,n}=T_1\cup\dots\cup T_n$ is its greatest upper bound.
\end{Th}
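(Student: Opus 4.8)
The plan is to exploit the bijection between the extended set $C$ and the family of nonempty subsets of $[n]=\{1,\dots,n\}$. First I would observe that, since $T_1,\dots,T_n$ are homogeneous and pairwise share no properties or methods, every class obtainable by (super)positions of $\cup$ is determined up to equivalence by the collection of original types it gathers: to each nonempty $S\subseteq[n]$ associate $T_S=\bigcup_{i\in S}T_i$, the singletons $T_{\{i\}}$ being the original classes $T_i$. By Theorem~\ref{th-1} the classes with $|S|\ge 2$ number $2^n-n-1$, so together with the $n$ singletons we obtain exactly $|C|=2^n-1$ classes, and $S\mapsto T_S$ is a bijection onto $2^{[n]}\setminus\{\emptyset\}$, because distinct index sets yield non-equivalent classes (no two original types coincide, by the disjointness hypothesis).

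Next I would pin down the order. Using Definitions~\ref{hh-sc}--\ref{hi-sc} together with the disjointness of the $T_i$, one checks that $T_S\subseteq T_{S'}$ holds precisely when every type $T_i$ with $i\in S$ occurs among the types of $T_{S'}$, i.e. precisely when $S\subseteq S'$. Hence $(C,\subseteq)$ is order-isomorphic to $(2^{[n]}\setminus\{\emptyset\},\subseteq)$; in particular $\subseteq$ is reflexive, antisymmetric and transitive on $C$, so $C$ is a partially ordered set.

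Then comes the core step: showing that $\cup$ computes joins. For non-equivalent $T_S,T_{S'}$, Definition~\ref{union} builds $T_S\cup T_{S'}$ from the types $T_i$, $i\in S$, together with the types $T_j$, $j\in S'$; since no two distinct original types are similar, the core of the result is empty and its projections are exactly the distinct types among these, namely $\{T_i : i\in S\cup S'\}$, so $T_S\cup T_{S'}=T_{S\cup S'}$, which lies in $C$ by Theorem~\ref{th-1}. From the order description it is then immediate that $T_{S\cup S'}$ is an upper bound of $\{T_S,T_{S'}\}$, and that any upper bound $T_U$ satisfies $S\cup S'\subseteq U$, hence $T_{S\cup S'}\subseteq T_U$; thus $T_{S\cup S'}=\sup\{T_S,T_{S'}\}$. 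The degenerate case of equivalent $T_S,T_{S'}$ is trivial, the join being the element itself. Every pair therefore has a least upper bound, so $(C,\subseteq,\cup)$ is an upper semilattice; and since $[n]$ contains every nonempty $S$, the class $T_{1,\dots,n}=T_1\cup\dots\cup T_n$ dominates all of $C$ and is its greatest element.

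I expect the main obstacle to be that core step, specifically the bookkeeping in Definition~\ref{union} when $S\cap S'\neq\emptyset$: one must argue that repeated occurrences of the same original type collapse to a single projection, so that the result is genuinely $T_{S\cup S'}$ rather than a class carrying $|S|+|S'|$ projections, and one must note that $\cup$ is defined only on non-equivalent classes, so idempotency is handled by convention rather than by the definition itself. Everything else — the subset bijection, the poset axioms, and the greatest-element claim — is routine once the disjointness hypothesis on $T_1,\dots,T_n$ is invoked.
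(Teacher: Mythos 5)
Your proof is correct, but it follows a genuinely different route from the paper's. The paper argues abstractly and algebraically: it takes as given (citing Definition~\ref{union}) that $\cup$ is idempotent, commutative and associative on the extended set $C$, then \emph{defines} the order by $T_1\subseteq T_2\Leftrightarrow T_1\cup T_2=T_2$ and verifies reflexivity, antisymmetry and transitivity purely from those algebraic laws, concluding the semilattice structure with $T_{1,\dots,n}$ as greatest upper bound. You instead build a concrete order-isomorphism $S\mapsto T_S$ between $C$ and the nonempty subsets of $\{1,\dots,n\}$, show $T_S\cup T_{S'}=T_{S\cup S'}$ using the disjointness hypothesis of Theorem~\ref{th-1}, and read off the semilattice as the one of nonempty subsets under set union. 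Your version buys several things the paper's does not: closure of $C$ under $\cup$ is made explicit rather than tacit; the partial order is tied back to the actual subclass relations of Definitions~\ref{hh-sc}--\ref{hi-sc} instead of being introduced purely algebraically; and you correctly flag that Definition~\ref{union} only covers \emph{nonequivalent} classes, so the idempotency the paper claims to follow ``from the definition'' is really a convention that must be added -- a gap your bookkeeping remark exposes, since with disjoint basic classes the collapse of repeated types is what keeps $T_S\cup T_{S'}$ from acquiring $|S|+|S'|$ projections. The paper's route, in exchange, is shorter and does not lean on the disjointness assumption beyond invoking Theorem~\ref{th-1} for the carrier, so it would generalize more directly if unions of overlapping classes were shown to satisfy the three laws.
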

\begin{proof}
According to the definition of upper semilattice, it should be a system $SL=(A,\Omega)$, where $A$ is a poset, $\Omega=\{\vee\}$ and $\vee$ is binary, idempotent, commutative and associative operation \cite{Birkhoff}, \cite{Davey-Priestley}.

In our case, carrier of upper semilattice is the set of classes $C$, where we define exploiter $\cup$, thus $SL=(C,\Omega)$, where $\Omega=\{\cup\}$. From the definition~\ref{union} it follows, that mentioned properties of $\vee$ are true for $\cup$, i.e.
\begin{enumerate}
\item $T_1\cup T_1=T_1$;
\item $T_1\cup T_2=T_2\cup T_1$;
\item $T_1\cup(T_2\cup T_3)=(T_1\cup T_2)\cup T_3$.
\end{enumerate}
Now, let us show that $C$ is a poset. For this, we should define $\forall T_1,T_2\in C\ | T_1\subseteq T_2\Leftrightarrow T_1\cup T_2=T_2$ and show that $\subseteq$ is a relation of partial order under the set $C$. It means, we should prove that relation $\subseteq$ is reflexive, antisymmetric, and transitive one.
\begin{enumerate}
\item $T_1\subseteq T_1\Leftrightarrow T_1\cup T_1=T_1$ follows from idempotency of $\cup$;
\item $T_1\subseteq T_2\Leftrightarrow T_1\cup T_2=T_2$, $T_2\subseteq T_1\Leftrightarrow T_2\cup T_1=T_1$ and from commutativity of $\cup$, we can conclude that $T_1=T_2$;
\item $T_1\subseteq T_2\Leftrightarrow T_1\cup T_2=T_2$, $T_2\subseteq T_3\Leftrightarrow T_2\cup T_3=T_3\Rightarrow(T_1\cup T_2)\cup T_3=T_1\cup(T_2\cup T_3)=T_1\cup T_3=T_3\Rightarrow T_1\cup T_3=T_3\Leftrightarrow T_1\subseteq T_3$.
\end{enumerate}
Thus, $SL=(C=\{T_1,\dots,T_n,T_{n+1},\dots,T_{2^n-1}\},\Omega=\{\cup\})$ is an upper semilattice, where $T_{1,\dots,n}=T_1\cup\dots\cup T_n$ is its greatest upper bound. $\square$
\end{proof}
Using results of Theorem \ref{th-2}, we can formulate the following corollary.
\begin{Cor}
\label{cor-31}
Set of classes of objects $C$ of any OODN, extended according to Theorem~\ref{th-1}, and union exploiter $\cup$, which is defined under it, create a finitely-generated universal algebra
\[G=\left(C=\{T_1,\dots,T_n,T_{n+1},\dots,T_{2^n-1}\},\Omega=\{\cup\}\right),\]
where $C_b=\{T_1,\dots,T_n\}$ is generative set for the set $C$.
\end{Cor}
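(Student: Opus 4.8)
The plan is to verify the two defining attributes of a finitely-generated universal algebra for the pair $G=(C,\Omega=\{\cup\})$: first, that $\Omega$ consists of everywhere-defined operations that are closed on the carrier $C$, so that $G$ is genuinely a universal algebra; and second, that the subalgebra of $G$ generated by the finite set $C_b=\{T_1,\dots,T_n\}$ coincides with the whole carrier $C$. Both halves reduce essentially to the counting statement of Theorem~\ref{th-1} together with the algebraic identities established in Theorem~\ref{th-2}.

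For the first half I would argue that $\cup$ is a total binary operation on $C$. By Definition~\ref{union} the union of two nonequivalent classes is a well-defined inhomogeneous class, and extending $\cup$ to equivalent (in particular, equal) arguments by idempotency $T\cup T=T$ — exactly as used in the proof of Theorem~\ref{th-2} — makes $\cup$ defined on all of $C\times C$. Closure is the real content: I would identify each element of the extended set $C=\{T_1,\dots,T_n,T_{n+1},\dots,T_{2^n-1}\}$ with a nonempty subset of $\{T_1,\dots,T_n\}$ — the generators correspond to singletons, and by Theorem~\ref{th-1} the remaining $2^n-n-1$ classes correspond bijectively to the subsets of cardinality at least $2$. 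Using commutativity, idempotency and associativity of $\cup$ (Theorem~\ref{th-2}), the union of the classes attached to subsets $I$ and $J$ is the class attached to $I\cup J$; hence $\cup$ maps $C\times C$ into $C$, and no finite application or superposition of $\cup$ starting from $C_b$ ever leaves $C$. Thus $G$ is a universal algebra.

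For the second half I would recall that finite generation means there is a finite $X\subseteq C$ whose generated subalgebra — the least subset of $C$ containing $X$ and closed under $\cup$ — equals $C$; I take $X=C_b$. Each generator $T_1,\dots,T_n$ belongs to $C_b$ by definition, and by the identification above every new class $T_{n+1},\dots,T_{2^n-1}$ has the form $T_{i_1}\cup\dots\cup T_{i_k}$ for some $2\le k\le n$ and $1\le i_1<\dots<i_k\le n$, i.e.\ it is produced from $C_b$ by a superposition of $\cup$. Therefore the subalgebra generated by $C_b$ contains all of $C$; being contained in $C$ by the closure just proved, it equals $C$. Since $|C_b|=n<\infty$, the algebra $G$ is finitely generated with $C_b$ as generating set, which is what the corollary asserts.

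The step I expect to be the main obstacle is making the closure argument airtight — precisely, setting up the bijection between the elements of the extended carrier $C$ and the nonempty subsets of $\{T_1,\dots,T_n\}$, and checking that superposition of $\cup$ on classes matches union of the associated subsets so that the count of Theorem~\ref{th-1} is neither an over- nor an under-count. Theorem~\ref{th-1} supplies the enumeration, but confirming that repeated or reordered unions of the same generators collapse to a single class (so nothing new is generated) genuinely uses the idempotency and associativity of $\cup$ recorded in Theorem~\ref{th-2}; once that is in hand, the rest is routine.
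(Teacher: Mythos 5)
Your proposal is correct and takes essentially the route the paper intends: the corollary is stated there without its own proof, as an immediate consequence of Theorems~\ref{th-1} and~\ref{th-2}, and your verification --- totality and closure of $\cup$ on the extended carrier via the identification of its elements with nonempty subsets of $\{T_1,\dots,T_n\}$ (using the idempotency, commutativity and associativity recorded in Theorem~\ref{th-2}), together with the observation that every element of $C$ is a superposition of unions of members of $C_b$ --- is exactly the filling-in of that implicit argument. Nothing needs to be corrected.
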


Now let us consider an example, which illustrates specific of exploiters-based knowledge extraction within OODN. Let us define the OODN
\[Salad=(O,C,R,E,M),\]
which describes some ingredients of a salad, for example cucumber, tomato, onion, cabbage, salt and sunflower oil. For this purpose, we define following sets of objects $O$, classes of objects $C$ and set of relations $R$
\begin{gather*}
O=\{cuc,tom,on,cab,sal,soil\},\\
C=\{Cuc,Tom,Cab,On,Spi,Oil\},\\
R=\{cuc\xrightarrow{an-inst.-of}Cuc,\ tom\xrightarrow{an-inst.-of}Tom,\ cab\xrightarrow{an-inst.-of}Cab,\\ on\xrightarrow{an-inst.-of}On,\ sal\xrightarrow{an-inst.-of}Spi,\ soil\xrightarrow{an-inst.-of}Oil.\}.
\end{gather*}
Suppose the set of exploiters is defined as $E=\{\cup\}$. We do not define the set of modifiers $M$, because it is not necessary within consideration of exploiters-based knowledge extraction.

Let us define the specifications of classes from set $C$ in the following way
\begin{gather*}
P(Cuc)=(p_1(Cuc),\dots,p_4(Cuc)),\ P(Tom)=(p_1(Tom),\dots,p_4(Tom)),\\
P(Cab)=(p_1(Cab),\dots,p_4(Cab)),\ P(On)=(p_1(On),\dots,p_4(On)),\\
P(Spi)=(p_1(Spi),\dots,p_4(Spi)),\ P(Oil)=(p_1(Oil),\dots,p_4(Oil)),
\end{gather*}
where $p_1(Cuc)$, $p_1(Tom)$, $p_1(Cab)$, $p_1(On)$ -- masses of vegetables, $p_1(Spi)$ -- type of spices, $p_1(Oil)$ -- type of oil, $p_2(Cuc)$, $p_2(Tom)$, $p_2(Cab)$, $p_2(On)$ -- colors of vegetables, $p_2(Spi)$ -- mass of spices, $p_2(Oil)$ -- volume of oil, $p_3(Cuc)$, $p_3(Tom)$, $p_3(Cab)$, $p_3(On)$ -- freshness of vegetables, $p_3(Spi)$ -- taste of spices, $p_3(Oil)$ -- color of oil, $p_4(Cuc)$, $p_4(Tom)$, $p_4(Cab)$, $p_4(On)$, $p_4(Spi)$, $p_4(Oil)$ -- prices. Values of all properties of these classes are defined in Table~\ref{tab-1}.

Let us define the specifications of objects from the set $O$, using specifications of their classes (see Table~\ref{tab-2}).

\begin{table}
\centering{
\caption{Specifications of classes $Cuc$, $Tom$, $Cab$, $On$, $Spi$, $Oil$}
\label{tab-1}
\begin{tabular}{lcccccc}
\hline\noalign{\smallskip}
\multicolumn{1}{c}{$p_i$} & Cuc & Tom & Cab & On & Spi & Oil\\
\noalign{\smallskip}
\hline
\noalign{\smallskip}
$p_1$ & $[0.07,0.18]$ kg & $[0.08,0.2]$ kg & $[0.4,1.3]$ kg & $[0.05,0.1]$ kg & undefined & undefined\\
$p_2$ & green & red & green & green-white & $[0.1,1.0]$ kg & $[0.5,1.0]$ l\\
$p_3$ & undefined & undefined & undefined & undefined & undefined & yellow\\
$p_4$ & $3$ USD/kg & $3.5$ USD/kg & $4$ USD/kg & $2$ USD/kg & $12$ USD/kg & $9$ USD/l\\
\hline
\end{tabular}}
\end{table}

\begin{table}
\centering{
\caption{Specifications of objects $cuc1$, $cuc2$, $tom1$, $tom2$, $cab$, $on$, $sal$, $soil$}
\label{tab-2}
\begin{tabular}{lcccccccc}
\hline\noalign{\smallskip}
\multicolumn{1}{c}{$p_i$} & cuc1 & cuc2 & tom1 & tom2 & cab & on & sal & soil\\
\noalign{\smallskip}
\hline
\noalign{\smallskip}
$p_1$ & $0.09$ kg & $0.08$ kg & $0.12$ kg & $0.1$ kg & $0.5$ kg & $0.1$ kg & salt & sunflower\\
$p_2$ & green & green & red & red & green & green-white & $0.5$ kg & $0.5$ l\\
$p_3$ & $1$ & $1$ & $1$ & $1$ & $1$ & $1$ & salty & yellow\\
$p_4$ & $0.27$ USD & $0.24$ USD & $0.42$ USD & $0.35$ USD & $2$ USD & $0.2$ USD & $6$ USD & $4.5$ USD\\
\hline
\end{tabular}}
\end{table}

We have described the OODN for the salad. Clearly that all elements of sets $O$, $C$ and $R$ are basic knowledge. Let us obtain all possible new knowledge from them using exploiter $\cup$. According to Theorems~\ref{th-1}-\ref{th-2} we obtain such 15 classes, that each of them describes 2 different types of objects
\begin{gather*}
CucTom,\ CucCab,\ CucOn,\ CucSpi,\ CucOil,\ TomCab,\ TomOn,\ TomSpi,\\
TomOil,\ CabOn,\ CabSpi,\ CabOil,\ OnSpi,\ OnOil,\ SpiOil;
\end{gather*}
such 20 classes, that each of them describes 3 different types of objects
\begin{gather*}
CucTomCab,\ CucTomOn,\ CucTomSpi,\ CucTomOil,\ CucCabOn,\\
CucCabSpi,\ CucCabOil,\ CucOnSpi,\ CucOnOil,\ CucSpiOil,\\
TomCabOn,\ TomCabSpi,\ TomCabOil,\ TomOnSpi,\ TomOnOil,\\
TomSpiOil,\ CabOnSpi,\ CabOnOil,\ CabSpiOil,\ OnSpiOil;
\end{gather*}
such 15 classes, that each of them describes 4 different types of objects
\begin{gather*}
CucTomCabOn,\ CucTomCabSpi,\ CucTomCabOil,\ CucTomOnSpi,\\
CucTomOnOil,\ CucTomSpiOil,\ CucCabOnSpi,\ CucCabOnOil,\\
CucCabSpiOil,\ CucOnSpiOil,\ TomCabOnSpi,\ TomCabOnOil,\\
TomCabSpiOil,\ TomOnSpiOil,\ CabOnSpiOil;
\end{gather*}
such 6 classes, that each of them describes 5 different types of objects
\begin{gather*}
CucTomCabOnSpi,\ CucTomCabOnOil,\ CucTomCabSpiOil,\\
CucTomOnSpiOil,\ CucCabOnSpiOil,\ TomCabOnSpiOil;
\end{gather*}
and 1 class, which describes 6 different types of objects
\[CucTomCabOnSpiOil.\]
As we can see, we obtain 57 new classes of objects, or in other words, 57 different combinations of salad's ingredients from the 6 basic ones. In such a way we extended the set of classes $C$ by adding new knowledge, extracted from basic ones. The most general obtained class $CucTomCabOnSpiOil$ is an inhomogeneous one and has following structure
\begin{gather*}
CucTomCabOnSpiOil=(pr_1(CucTomCabOnSpiOil),\dots,\\
pr_6(CucTomCabOnSpiOil)),
\end{gather*}
where
\begin{gather*}
pr_1(CucTomCabOnSpiOil)=P(Cuc), pr_2(CucTomCabOnSpiOil)=P(Tom),\\
pr_3(CucTomCabOnSpiOil)=P(Cab), pr_4(CucTomCabOnSpiOil)=P(On),\\
pr_5(CucTomCabOnSpiOil)=P(Spi), pr_6(CucTomCabOnSpiOil)=P(Oil).
\end{gather*}
All other obtained classes have the similar structure.

According to Theorem~\ref{th-2}, the extended set of classes $C$ and exploiter $\cup$ create upper semilattice. It means, that there is partial order relation $\subseteq$, which is defined on the set $C$. Furthermore, according to Corollary~\ref{cor-31} they create a finitely-generated universal algebra
\[G=(C=\{Cuc,Tom,Cab,On,Spi,Oil,\dots,CucTomCabOnSpiOil\},E=\{\cup\}),\]
where $C_b=\{Cuc,Tom,Cab,On,Spi,Oil\}\subseteq C$ is generative or basic set for the set $C$.

Summarizing, we obtained all possible unions of the basic classes from the set $C$. All these classes can be viewed as schemas or recipes for which we can use objects defined in Table~\ref{tab-2}. In means that in such a way we can create particular salad, moreover we can create different salads using one scheme putting different proportions of ingredients. Using chosen scheme, we can calculate different properties of the cooked salad, for example its prise, mass, etc.

\section{Conclusions}

This paper contains consideration of main features of knowledge extraction mechanisms of such object-oriented KRMs as frames, OOP and OODNs. Furthermore, conception of universal exploiters within object-oriented dynamic networks is also discussed.

The main achievement of the paper is introduction of new exploiters-based knowledge extraction method for OODNs, which always provides generating of finitely defined set of new classes of objects, based on the basic set of classes. The main features of the proposed method are:
\begin{itemize}
\item ability to calculate:
  \begin{itemize}
  \item quantity of new classes, which can be obtained, using proposed approach,
  \item quantity of different types, which each of obtained classes describes;
\end{itemize}
\item the basic set of classes of any OODN, extended according to proposed approach, together with union exploiter, create:
  \begin{itemize}
  \item upper semilattice,
  \item finitely generated universal algebra, for which the basic set of classes of OODN is a generative set.
  \end{itemize}
\end{itemize}
It allows us to extract new knowledge from the basic ones when we need them and to save only basic set of classes in the knowledge base and database. Moreover, obtained knowledge always have the defined structure, i.e. they form the upper semilattice. It means that we can use the results of upper semilattice theory in such kind of knowledge extraction and representation.

However, despite all advantages, proposed approach requires further research, at least in the following directions:
\begin{itemize}
\item study of the case when the basic set of classes of OODN contains classes that has some common properties or methods,
\item study of the case when the OODN is a fuzzy one,
\item adapting and usage of proposed approach in other known object-oriented KRMs.
\end{itemize}

%
%

%
%

\end{document}